\documentclass[11pt]{article}

\usepackage[T1]{fontenc}
\usepackage[utf8]{inputenc}
\usepackage{lmodern}
\usepackage[margin=1in]{geometry}
\usepackage{microtype}

\usepackage{amsmath,amssymb,amsthm,mathtools}
\usepackage{bm}

\usepackage{graphicx}
\graphicspath{{figures/}}
\usepackage{booktabs}
\usepackage{array}
\usepackage{longtable}

\usepackage[shortlabels]{enumitem}
\usepackage[numbers,sort&compress]{natbib}
\usepackage{xcolor}
\usepackage[colorlinks=true,linkcolor=blue,citecolor=blue,urlcolor=blue]{hyperref}
\usepackage[capitalise,noabbrev]{cleveref}

\theoremstyle{plain}
\newtheorem{theorem}{Theorem}[section]
\newtheorem{lemma}[theorem]{Lemma}
\newtheorem{proposition}[theorem]{Proposition}

\theoremstyle{definition}

\theoremstyle{remark}

\newcommand{\R}{\mathbb{R}}

\newcommand{\one}{\mathbf{1}}
\newcommand{\PWidth}{\operatorname{PWidth}}
\DeclareMathOperator{\conv}{conv}
\DeclareMathOperator{\cone}{cone}
\DeclareMathOperator{\dist}{dist}

\DeclareMathOperator*{\argmax}{arg\,max}

\title{Pyramidal Width Can Increase Under Vertex Insertion}
\author{Jinze Zhao\\University of California, San Diego\\\texttt{jiz419@ucsd.edu}}
\date{}

\hypersetup{
  pdftitle={Pyramidal Width Can Increase Under Vertex Insertion},
  pdfauthor={Jinze Zhao}
}

\begin{document}

\maketitle

\begin{abstract}
Lacoste-Julien and Jaggi \cite{lacostejulienjaggi2015} conjectured in 2015 that the pyramidal width of a
polytope cannot increase when a vertex is added, provided that every old point
remains a vertex. We give an exact counterexample with six integer points in
$\R^3$. For
\[
P=\conv\{v_0,\ldots,v_4\},\qquad
Q=\conv\{v_0,\ldots,v_5\},
\]
where
\[
\begin{aligned}
v_0&=(-1,-3,-1), & v_1&=(3,2,-2), & v_2&=(0,2,1),\\
v_3&=(-1,-3,3),  & v_4&=(-2,0,1), & v_5&=(-1,0,-2),
\end{aligned}
\]
all five vertices of $P$ remain vertices of $Q$, but
\[
\PWidth(P)^2=\frac{48}{353}
\quad\text{and}\quad
\PWidth(Q)^2=\frac{36}{133}.
\]
Thus vertex insertion increases pyramidal width by the factor
$\sqrt{1059/532}\approx 1.410886779$. The proof uses the equivalence between
pyramidal width and facial distance, certifies both face lattices by integer
supporting hyperplanes, and evaluates every facial distance by a finite
rational calculation. A dependency-free exact verifier accompanies the paper.

\medskip
\noindent\textbf{Keywords:} Frank--Wolfe method; pyramidal width; facial
distance; polytope conditioning; exact counterexample.

\end{abstract}


\section{Introduction}
\label{sec:introduction}

The Frank--Wolfe method minimizes a differentiable convex function over a
compact convex set using only linear optimization subproblems
\citep{frankwolfe1956,jaggi2013}. For polytopal feasible regions, away-step,
pairwise, and fully corrective variants can converge linearly. The rate bounds
of \citet{lacostejulienjaggi2015} separate analytic conditioning of the
objective from geometric conditioning of the feasible polytope. Their
geometric quantity is the \emph{pyramidal width}.

In Section~3.1, footnote~7 of that work, Lacoste-Julien and Jaggi conjectured
that pyramidal width is non-increasing when another vertex is added, as long
as the old points remain vertices. Such a rule would be useful because it could
transfer lower bounds from a simple containing polytope to a complicated
vertex subset. The conjecture was subsequently invoked as a conditional route
to a pyramidal-width lower bound for combinatorial strategy polytopes
\citep{nakamura2020}.

The conjecture is false. This paper makes three contributions.
\begin{enumerate}[(i)]
  \item We give a six-vertex integer counterexample in dimension three. Adding
  one vertex increases pyramidal width by approximately $41.1\%$.
  \item We give an exact proof. Integer supporting planes certify the complete
  face lattices, and rational arithmetic evaluates the facial distance of every
  nonempty proper face.
  \item We provide a short dependency-free verifier that independently checks
  the combinatorics, all $46$ facial-distance values, and the strict inequality.
\end{enumerate}

The counterexample does not challenge the established convergence theorems
that use pyramidal width as a positive geometric constant. It invalidates only
the proposed general monotonicity principle and deductions that rely on that
principle without additional structure.

The remainder of the paper is organized as follows. \Cref{sec:background}
records the definitions and the facial-distance equivalence of
\citet{pena-rodriguez2019}. \Cref{sec:counterexample} states the counterexample
and certifies its face lattices. \Cref{sec:exact-method} gives the exact finite
distance method, and \cref{sec:evaluation} applies it. Full rational tables and
supporting certificates appear in the appendix.

\section{Pyramidal width and facial distance}
\label{sec:background}

Throughout, all distances and inner products are Euclidean. For a finite set
$A\subset\R^d$ and a nonzero direction $r$, define
\[
\operatorname{dirW}(A,r)
  :=\max_{s,v\in A}\left\langle\frac{r}{\lVert r\rVert},s-v\right\rangle.
\]
Let $M=\conv(A)$. If $x\in M$, let $\mathcal S_x$ be the collection of
subsets $S\subseteq A$ for which $x$ is a proper convex combination of the
points in $S$, and choose
\[
s(A,r)\in\argmax_{v\in A}\langle r,v\rangle.
\]
The pyramidal directional width is
\[
\operatorname{PdirW}(A,r,x)
  :=\min_{S\in\mathcal S_x}
    \operatorname{dirW}\bigl(S\cup\{s(A,r)\},r\bigr).
\]
Following \citet[Section~3]{lacostejulienjaggi2015}, the pyramidal width is
\begin{equation}
\PWidth(A)
  :=\min_{\substack{K\text{ a face of }M,\ x\in K,\\
                    0\ne r\in\cone(K-x)}}
       \operatorname{PdirW}(K\cap A,r,x).
\label{eq:pwidth-definition}
\end{equation}
The minimization over feasible directions in all faces prevents ordinary width
from degenerating along a lower-dimensional face.

For a polytope $R$, write $V(R)$ for its vertex set and abbreviate
$\PWidth(R):=\PWidth(V(R))$. Define its facial distance by
\begin{equation}
\delta(R):=
\min_{\substack{F\text{ a nonempty proper face of }R}}
\dist\!\left(F,\conv\bigl(V(R)\setminus F\bigr)\right).
\label{eq:facial-distance}
\end{equation}
Here $V(R)\setminus F$ means the vertices of $R$ that are not contained in
$F$. Theorems~1 and~2 of \citet{pena-rodriguez2019} imply the exact identity
\begin{equation}
\delta(R)=\PWidth(R).
\label{eq:equivalence}
\end{equation}
This equivalence is the key to the proof: it replaces the nested directional
optimization in \cref{eq:pwidth-definition} by finitely many distances between
convex hulls.

The vertex-insertion conjecture of
\citet[Section~3.1, footnote~7]{lacostejulienjaggi2015}
can now be written as follows. If $A=V(\conv A)$ and all points of $A$ remain
vertices of $\conv(A\cup\{v\})$, then
\begin{equation}
\PWidth(A\cup\{v\})\leq \PWidth(A).
\label{eq:conjecture}
\end{equation}
We disprove \cref{eq:conjecture} in dimension three.

\section{The counterexample and its combinatorics}
\label{sec:counterexample}

Consider the six points in \cref{tab:vertices} and the nested polytopes
\begin{equation}
P:=\conv\{v_0,v_1,v_2,v_3,v_4\},\qquad
Q:=\conv\{v_0,v_1,v_2,v_3,v_4,v_5\}.
\label{eq:PQ}
\end{equation}

\begin{table}[htbp]
  \centering
  \caption{Integer vertices of the counterexample.}
  \label{tab:vertices}
  \begin{tabular}{@{}crrr@{}}
    \toprule
    Vertex & First coordinate & Second coordinate & Third coordinate \\
    \midrule
    $v_0$ & $-1$ & $-3$ & $-1$ \\
    $v_1$ & $ 3$ & $ 2$ & $-2$ \\
    $v_2$ & $ 0$ & $ 2$ & $ 1$ \\
    $v_3$ & $-1$ & $-3$ & $ 3$ \\
    $v_4$ & $-2$ & $ 0$ & $ 1$ \\
    $v_5$ & $-1$ & $ 0$ & $-2$ \\
    \bottomrule
  \end{tabular}
\end{table}

\begin{theorem}[Failure of vertex-insertion monotonicity]
\label{thm:counterexample}
Every point $v_0,\ldots,v_4$ is a vertex of both $P$ and $Q$, while
\begin{equation}
\PWidth(P)^2=\frac{48}{353}
\quad\text{and}\quad
\PWidth(Q)^2=\frac{36}{133}.
\label{eq:main-values}
\end{equation}
Consequently,
\[
\frac{\PWidth(Q)}{\PWidth(P)}
=\sqrt{\frac{1059}{532}}
\approx 1.410886779>1.
\]
\end{theorem}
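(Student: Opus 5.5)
The plan is to replace pyramidal width by facial distance via \cref{eq:equivalence}, so that $\PWidth(P)=\delta(P)$ and $\PWidth(Q)=\delta(Q)$. Each quantity then becomes a minimum of finitely many Euclidean distances between convex hulls of explicit integer point sets. The proof splits into three stages: certify the vertex sets and face lattices, evaluate every facial distance exactly, and take minima and compare.

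\textbf{Combinatorics.} I would first show that each $v_i$ ($i\le 4$) is a vertex of $Q$, and hence of $P$. For each such $i$ I exhibit an integer vector $c_i$ with $\langle c_i,v_i\rangle>\langle c_i,v_j\rangle$ for all $j\ne i$, $j\le 5$; the same vector also works for $P$, and $v_5$ needs one further vector to be a vertex of $Q$. Next I determine all facets of $P$ and of $Q$. For each candidate triple of points I compute the integer normal $(v_b-v_a)\times(v_c-v_a)$ and test the signs of its inner products with the remaining points. A triple spans a facet exactly when all remaining values lie weakly on one side, with coplanar points added to that facet. Since there are only five or six points, in general position as the sign tests will confirm, the facets are triangles. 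Edges and vertices are then read off as intersections of facets. For each edge or vertex I also record an integer supporting hyperplane so that faces and non-faces are certified independently. This gives the complete lists of nonempty proper faces; the abstract's count of $46$ values indicates roughly $21$ faces for $P$ and $25$ for $Q$.

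\textbf{Exact distances.} For each face $F$ with $G=\conv(V\setminus F)$, I compute $\dist(F,G)$ as the minimum norm of $F-G=\conv\{f-g\}$, a polytope with few generators. The minimum-norm point of a finite point set lies in the relative interior of the hull of some affinely independent subset $T$. For each candidate $T$ I solve the small rational linear system, namely the Gram system with the affine constraint, for the projection of $0$ onto $\operatorname{aff}(T)$. The candidate is accepted if its barycentric coefficients are positive and the optimality certificate $\langle p,w\rangle\ge\lVert p\rVert^2$ holds for every generator $w$. This KKT check proves minimality exactly and yields a rational squared distance. Alternatively, when $F$ and $G$ are a vertex and a facet, or two skew edges, a closed-form projection formula can be applied directly, but the certificate check is the uniform tool.

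\textbf{Conclusion and main obstacle.} Taking minima gives $\delta(P)^2=48/353$ and $\delta(Q)^2=36/133$, each attained by an identified face. Then $\frac{36/133}{48/353}=\frac{36\cdot353}{133\cdot48}=\frac{1059}{532}>1$, which yields the stated ratio. The main obstacle is not conceptual but the correctness and completeness of the finite case analysis. Missing a face, or misidentifying the active support $T$ in a distance computation, would break the proof. I would guard against the first by the supporting-hyperplane certificates together with an Euler-characteristic check ($V-E+F=2$). I would guard against the second by always verifying the dual optimality inequality rather than trusting a chosen support, so that every claimed distance is certified by a rational inequality check.
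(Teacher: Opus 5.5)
Your proposal is correct and follows essentially the same route as the paper: the facial-distance identity \eqref{eq:equivalence}, integer supporting-hyperplane certification of both face lattices, and exact rational minimum-norm computations over $\conv\{f-g\}$. The one real difference is that you certify each distance with the dual inequality $\langle p,w\rangle\ge\lVert p\rVert^2$ for a guessed support, whereas \cref{lem:finite-distance} enumerates all affinely independent supports. One small, harmless slip: there are $5+9+6=20$ faces for $P$ and $6+12+8=26$ for $Q$, not $21$ and $25$.
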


We first certify the combinatorial part of the theorem. A string such as
$013$ denotes the triangle $\conv\{v_0,v_1,v_3\}$.

\begin{proposition}[Exact face lattices]
\label{prop:face-lattices}
The facets of $P$ are
\begin{equation}
013,\ 014,\ 034,\ 123,\ 124,\ 234,
\label{eq:P-facets}
\end{equation}
and the facets of $Q$ are
\begin{equation}
013,\ 015,\ 034,\ 045,\ 123,\ 125,\ 234,\ 245.
\label{eq:Q-facets}
\end{equation}
In particular, $P$ and $Q$ have $f$-vectors $(5,9,6)$ and $(6,12,8)$,
respectively, and every old vertex remains a vertex after $v_5$ is inserted.
\end{proposition}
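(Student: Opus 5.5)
The plan is to certify each listed triangle as a facet by exhibiting an integer supporting plane, and then to argue that these triangles form the whole boundary. For each triple $ijk$ in \eqref{eq:P-facets} I will compute the integer normal $a_{ijk}=(v_j-v_i)\times(v_k-v_i)$ and set $b_{ijk}=\langle a_{ijk},v_i\rangle$. I then orient the normal, negating it if needed, so that $\langle a_{ijk},v_m\rangle<b_{ijk}$ for every remaining vertex $v_m$ of $P$. Because the inequality is strict at every other point, the plane meets $P$ exactly in the triangle $ijk$. Since $v_i,v_j,v_k$ are affinely independent, which is guaranteed by $a_{ijk}\neq 0$, this triangle is a $2$-dimensional face, i.e.\ a facet. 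The same is done for each triple in \eqref{eq:Q-facets}, now checking strict inequality against all of $v_0,\dots,v_5$ other than the three on the plane. For example, $013$ is checked against $v_2,v_4$ in $P$ and against $v_2,v_4,v_5$ in $Q$.

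Next I show the lists are complete. The certified triangles of $P$ use $18$ edge-incidences, and I will check that every edge appearing in them lies in exactly two of the listed triangles. The triangles therefore glue into a closed pseudomanifold surface contained in $\partial P$. A closed $2$-dimensional surface inside the $2$-sphere $\partial P$ must be all of it, so there are no further facets. The same check for $Q$ gives $8$ triangles in which each edge appears twice. This argument also shows that every facet is a triangle, so no coplanar quadruple is being mislabeled.

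Counting then finishes the proof. The edges of $P$ are the $9$ distinct pairs appearing in its facets, and the edges of $Q$ are the $12$ distinct pairs appearing in its facets. These counts agree with Euler's formula: $5-9+6=2$ and $6-12+8=2$. Every index $0,\dots,4$ occurs in some facet of $Q$, and in $P$ as well. A point that lies on a facet, is not in the convex hull of the others, and is a vertex of that facet triangle is a vertex of the polytope. More directly, each $v_m$ is the unique point of the polytope lying on all the certified planes of its incident facets. Hence $v_0,\dots,v_4$ are vertices of both $P$ and $Q$.

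The main obstacle is purely computational, namely carrying out the $14$ sign checks without error. The only conceptual step is the completeness argument. If the closed-surface argument is deemed insufficiently elementary, I would instead use a second approach: check that every non-listed triple fails to be a supporting plane by exhibiting vertices on both sides. That requires only $\binom53+\binom63=30$ triples in total.
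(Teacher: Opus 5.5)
Your proof is correct. The supporting-plane certificates and the reading-off of vertices and edges from the certified triangles match the paper; the route differs only in how you show the facet lists are complete.

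\textbf{Paper's route.} The paper first uses the fifteen nonzero affine determinants to get general position, so both polytopes are simplicial. It then applies Euler's relation $f_2=2f_0-4$ for simplicial $3$-polytopes. Since the $6$ and $8$ certified facets already reach the forced count, no facet can be missing.

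\textbf{Your route.} You check that every edge of a listed triangle lies in exactly two listed triangles. Since each edge of a $3$-polytope lies in exactly two facets, a listed facet can never be adjacent across an edge to an unlisted one. Because the facet-adjacency graph is connected (it is the graph of the polar polytope), there are no unlisted facets. That is the precise content of your ``closed surface inside the $2$-sphere'' remark, and phrasing it through dual-graph connectivity makes it airtight.

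\textbf{Comparison.} Your argument needs no determinants and yields simpliciality as a conclusion rather than an input; Euler's formula becomes only a consistency check. The paper's argument trades the topological or connectivity fact for a pure count, at the cost of the general-position certificate.

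\textbf{Fallback.} Refuting the $16$ non-listed triples also works. Any facet contains three affinely independent vertices, and the plane through them would have to be supporting. The listed planes contain exactly three input points each, so no facet can strictly contain a listed triangle.
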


\begin{proof}
For each triangle in \cref{eq:P-facets,eq:Q-facets},
\cref{tab:supporting-planes} gives integers $a,b$ such that every relevant
vertex satisfies $a^\top x\leq b$, with equality precisely at the three named
vertices. Thus every listed triangle is a facet.

The $15$ affine determinants in \cref{tab:determinants} are nonzero. Hence the
points are in general affine position, both polytopes are full-dimensional,
and every facet is triangular. Each input point occurs in at least one listed
supporting triangle, so each is extreme. A simplicial three-polytope with
$f_0$ vertices has $f_2=2f_0-4$ facets by Euler's relation
\citep{ziegler1995}. The lists contain exactly $6=2\cdot5-4$ and
$8=2\cdot6-4$ facets, respectively, so no facets are omitted. Their
two-element subsets give the asserted edge counts.
\end{proof}

\begin{figure}[htbp]
  \centering
  \includegraphics[width=\linewidth]{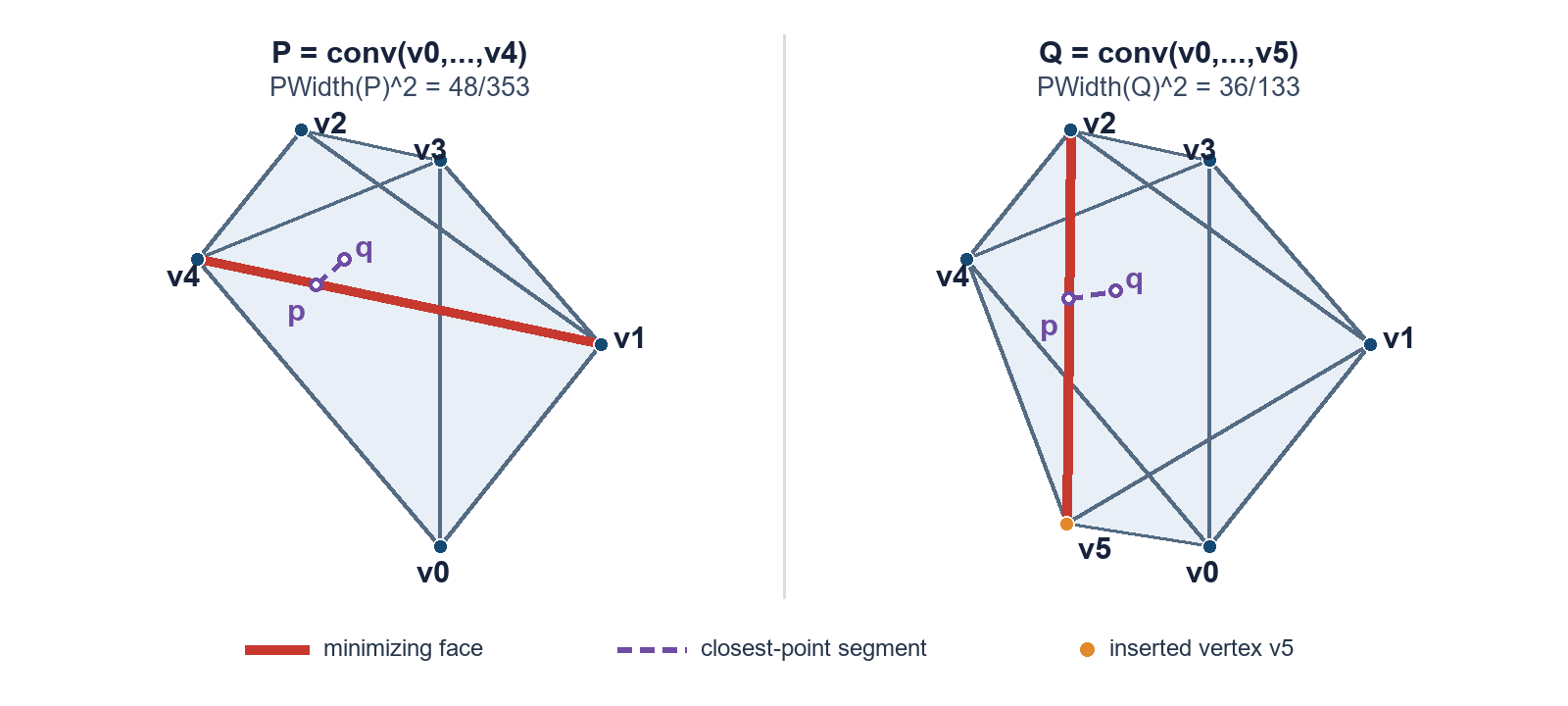}
  \caption{Orthographic visualization of $P$ (left) and $Q$ (right). The
  minimizing face is highlighted in red: edge $14$ for $P$ and edge $25$ for
  $Q$. The dashed segment joins the exact closest points used in
  \cref{sec:evaluation}. The drawing is illustrative and is not used in the
  proof.}
  \label{fig:geometry}
\end{figure}

\section{Exact facial-distance certification}
\label{sec:exact-method}

This section describes a finite rational method for evaluating every term in
\cref{eq:facial-distance}. For finite sets $A,B\subset\R^3$, write
$\conv(A)-\conv(B):=\{x-y:x\in\conv(A),\ y\in\conv(B)\}$ for their
Minkowski difference.

\begin{lemma}[Finite exact distance calculation]
\label{lem:finite-distance}
Let
\[
D(A,B):=\{a-b:a\in A,\ b\in B\}.
\]
Then
\begin{equation}
\dist\bigl(\conv(A),\conv(B)\bigr)^2
=\min_{z\in\conv D(A,B)}\lVert z\rVert^2.
\label{eq:difference-distance}
\end{equation}
If $A$ and $B$ have rational coordinates, the value in
\cref{eq:difference-distance} can be obtained by enumerating affinely
independent subsets of $D(A,B)$ of size at most four and solving rational
linear systems.
\end{lemma}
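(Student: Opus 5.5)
The proof has two parts: the distance identity, and the finite rational procedure for evaluating it.

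\emph{The identity.} First I will show that $\conv(A)-\conv(B)=\conv D(A,B)$. For the inclusion $\supseteq$, note that every $a-b$ with $a\in A$, $b\in B$ lies in $\conv(A)-\conv(B)$. That set is convex, being the Minkowski sum of the convex sets $\conv(A)$ and $-\conv(B)$, so it contains $\conv D(A,B)$. For $\subseteq$, write $x=\sum_i\lambda_i a_i$ and $y=\sum_j\mu_j b_j$ as convex combinations. Then
\[
x-y=\sum_{i,j}\lambda_i\mu_j(a_i-b_j),
\]
and the weights $\lambda_i\mu_j$ are nonnegative and sum to one. Since
\[
\dist(\conv A,\conv B)=\min\{\lVert x-y\rVert : x\in\conv A,\ y\in\conv B\},
\]
and the minimum is attained by compactness, \cref{eq:difference-distance} follows.

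\emph{Reduction to a finite computation.} Let $z^\ast$ be the unique minimizer of $\lVert z\rVert^2$ over the polytope $C=\conv D(A,B)$. I will use three observations.
\begin{enumerate}[(i)]
  \item By Carathéodory's theorem in the affine hull of the minimal face $G$ of $C$ containing $z^\ast$, $z^\ast$ lies in the relative interior of a simplex $\conv(T)$ with $T\subseteq D(A,B)$ affinely independent and $|T|\le4$. I will take $T$ to be a minimal subset of the vertices of $G$ with $z^\ast\in\conv(T)$; minimality gives strictly positive weights.
  \item Since $z^\ast$ is a relative-interior minimizer over $\conv(T)$, it is the orthogonal projection of the origin onto $\operatorname{aff}(T)$. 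Writing $T=\{t_0,\dots,t_k\}$ and $z=\sum_i\lambda_i t_i$ with $\sum_i\lambda_i=1$, the first-order conditions are $\langle z,t_i-t_0\rangle=0$ for $i=1,\dots,k$. Together with the affine constraint this gives a $(k+1)\times(k+1)$ linear system with rational coefficients. Its Gram-type matrix is nonsingular because $T$ is affinely independent, so Cramer's rule produces a unique rational solution $\lambda$ and a rational value $\lVert z\rVert^2$.
  \item The procedure is as follows. For every affinely independent $T\subseteq D(A,B)$ with $|T|\le 4$, solve this system. Keep $T$ only if all $\lambda_i\ge0$, so that the candidate $z_T$ lies in $C$. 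The value of \cref{eq:difference-distance} is then the minimum of $\lVert z_T\rVert^2$ over the kept sets.
\end{enumerate}

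\emph{Correctness.} Each kept $z_T$ belongs to $C$, so the computed minimum is at least the true minimum. Conversely, the particular $T$ from (i) is enumerated, passes the sign test, and yields exactly $z^\ast$, so the computed minimum is at most the true minimum. As an alternative certificate, one can check the variational inequality $\langle z_T,d-z_T\rangle\ge0$ for all $d\in D(A,B)$, which also certifies optimality with rational arithmetic alone.

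\emph{Main obstacle.} The only delicate point is justifying that the minimizer is a relative-interior point of some affinely independent subset of size at most four with positive weights, and that on such a subset the minimizer coincides with the affine projection. I will handle this through the minimal-face argument in (i): restricted to the affine hull of $G$, $z^\ast$ is a local, hence global, minimizer of the squared norm on that affine subspace. It is therefore the projection of the origin onto $\operatorname{aff}(G)$. Because $\operatorname{aff}(T)=\operatorname{aff}(G)$ or $\operatorname{aff}(T)$ is a subspace of it still containing $z^\ast$, $z^\ast$ is also the projection of the origin onto $\operatorname{aff}(T)$. Everything else is routine rational linear algebra.
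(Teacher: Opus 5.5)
Your proposal is correct and follows essentially the same route as the paper. You prove $\conv(A)-\conv(B)=\conv D(A,B)$ with product weights, using convexity of the Minkowski sum for the reverse inclusion where the paper uses row and column marginals, then enumerate affinely independent Carath\'eodory supports of size at most four, solve an equivalent form of the bordered KKT system, and keep candidates with $\lambda\ge 0$. Your exhaustiveness argument takes a minimal positive-weight support $T$ of $z^\ast$ and shows that $z^\ast$ is the projection of the origin onto $\operatorname{aff}(T)$; this is a slightly more explicit version of the paper's reduction ``negative barycentric coordinate $\Rightarrow$ minimum on a proper face covered by a smaller support.''
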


\begin{proof}
First,
\begin{equation}
\conv(A)-\conv(B)=\conv D(A,B).
\label{eq:minkowski-hull}
\end{equation}
Indeed, if $x=\sum_i\alpha_i a_i$ and $y=\sum_j\beta_j b_j$, then
$x-y=\sum_{i,j}\alpha_i\beta_j(a_i-b_j)$. Conversely, the row and column
marginals of any convex combination of the differences produce points
$x\in\conv(A)$ and $y\in\conv(B)$. Equation~\eqref{eq:difference-distance}
follows immediately.

By Caratheodory's theorem in $\R^3$, a minimizer $z$ has an affinely
independent representation using $k\leq4$ difference vectors
$d_1,\ldots,d_k$. For a proposed support, let $D=[d_1\ \cdots\ d_k]$ and
$G=D^\top D$. The stationary point of $\lVert D\lambda\rVert^2$ on
$\one^\top\lambda=1$ is obtained from
\begin{equation}
\begin{bmatrix}
G&\one\\
\one^\top&0
\end{bmatrix}
\begin{bmatrix}\lambda\\ \mu\end{bmatrix}
=
\begin{bmatrix}0\\1\end{bmatrix}.
\label{eq:kkt-system}
\end{equation}
For an affinely independent support, the bordered matrix is nonsingular. The
candidate is retained when $\lambda\geq0$.

For completeness, singular supports may be skipped because every point of the
convex hull has an affinely independent Caratheodory representation. If the
affine projection for a support has a negative barycentric coordinate, the
minimum over that simplex lies on a proper face, which is covered by a smaller
support. Thus enumeration of all affinely independent supports of size at most
four is exhaustive. Rational inputs make \cref{eq:kkt-system}, its solution,
and the resulting squared norm rational.
\end{proof}

\section{Evaluation of the two polytopes}
\label{sec:evaluation}

For each nonempty proper face $F$ of $P$ and $Q$, apply
\cref{lem:finite-distance} with
\[
A=V(F),\qquad B=V(R)\setminus F.
\]
There are $5+9+6=20$ such faces for $P$ and $6+12+8=26$ for $Q$. The complete
tables are \cref{tab:P-distances,tab:Q-distances}; every entry is a squared
distance.

For $P$, the unique minimum is attained at the edge $F=14$:
\begin{equation}
\delta(P)^2=\frac{48}{353}.
\label{eq:P-minimum}
\end{equation}
The next smallest table entry is $432/233$, so the minimum is strict. Exact
closest points are
\begin{equation}
p=\frac{104v_1+249v_4}{353}
  =\frac{1}{353}(-186,208,41),\qquad
q=\frac{110v_0+243v_2}{353}
  =\frac{1}{353}(-110,156,133).
\label{eq:P-pair}
\end{equation}
Their difference is
\[
r=p-q=\frac{1}{353}(-76,52,-92),
\qquad \lVert r\rVert^2=\frac{48}{353}.
\]
The positive coefficients in \cref{eq:P-pair} place $p$ in edge $14$ and $q$
in $\conv\{v_0,v_2,v_3\}$. Moreover,
\[
\langle r,v_1-p\rangle=\langle r,v_4-p\rangle=0,
\quad
\langle r,v_0-q\rangle=\langle r,v_2-q\rangle=0,
\quad
\langle r,v_3-q\rangle=-\frac{368}{353}<0.
\]
These are the first-order closest-pair inequalities; the exhaustive table
supplies the global lower bound.

For $Q$, the unique minimum is attained at edge $F=25$:
\begin{equation}
\delta(Q)^2=\frac{36}{133}.
\label{eq:Q-minimum}
\end{equation}
The next smallest value is $144/475$. Exact closest points are
\begin{equation}
p'=\frac{76v_2+57v_5}{133}=\frac{1}{7}(-3,8,-2),
\qquad
q'=\frac{49v_1+84v_4}{133}=\frac{1}{19}(-3,14,-2).
\label{eq:Q-pair}
\end{equation}
Thus
\[
r'=p'-q'=\frac{1}{133}(-36,54,-24),
\qquad \lVert r'\rVert^2=\frac{36}{133}.
\]
The active vertices again have zero first-order margins, while the two inactive
complement vertices satisfy
\[
\langle r',v_0-q'\rangle=-\frac{150}{133}<0,
\qquad
\langle r',v_3-q'\rangle=-\frac{246}{133}<0.
\]

Combining \cref{eq:equivalence,eq:P-minimum,eq:Q-minimum} gives
\[
\PWidth(P)^2=\frac{48}{353},\qquad
\PWidth(Q)^2=\frac{36}{133}.
\]
Finally,
\[
\frac{36}{133}-\frac{48}{353}
=\frac{6324}{46949}>0,
\]
which proves \cref{thm:counterexample}.

\section{Consequences, scope, and reproducibility}
\label{sec:discussion}

\paragraph{What the example resolves.}
The construction disproves the general vertex-addition inequality
\cref{eq:conjecture}. In particular, a pyramidal-width lower bound for a vertex
subset cannot be inferred only by inserting its missing vertices into a better
understood containing polytope. The conditional use of that route in
\citet{nakamura2020} therefore requires a separate geometric argument.

\paragraph{What the example does not resolve.}
The example is not a $0/1$-polytope and does not invalidate independently
proved facial-distance bounds for structured families; see, for example,
\citet{chakrabarti2024}. It also does not contradict the positivity of
pyramidal width for a fixed finite vertex set or any Frank--Wolfe convergence
theorem expressed directly in terms of the actual width. A monotonicity result
for adding redundant atoms while keeping the convex hull fixed is a different
statement: here $v_5$ is a new extreme point and changes the polytope.

\paragraph{Reproducibility.}
The accompanying file \texttt{code/verify\_pyramidal\_counterexample.js} shared via the \href{https://drive.google.com/drive/folders/18GYfM3LD8MjNw6QCMuHux525sSFbdLkg?usp=sharing}{link} uses
only Node.js and exact BigInt rational arithmetic. It independently:
\begin{enumerate}[(i)]
  \item enumerates and certifies all supporting facets and affine
  determinants;
  \item reconstructs all vertices, edges, and facets;
  \item enumerates the Caratheodory supports in \cref{lem:finite-distance};
  \item checks every entry of \cref{tab:P-distances,tab:Q-distances}; and
  \item asserts the unique minima and their strict ratio.
\end{enumerate}
Run it from the source directory with
\begin{center}
\texttt{node code/verify\_pyramidal\_counterexample.js}.
\end{center}

\paragraph{Priority.}
To the best of our knowledge, this is the first counterexample to the
vertex-addition monotonicity conjecture of \citet{lacostejulienjaggi2015}.
Targeted exact-phrase, citation, and specialist-literature searches through
July~31, 2026 found no prior proof or counterexample. This statement is a
literature-search report, not a substitute for peer review or a formal priority
determination.

\section{Conclusion}
\label{sec:conclusion}

Pyramidal width is not monotone under vertex insertion, even for
full-dimensional simplicial polytopes in $\R^3$ with integer coordinates. The
explicit pair $P\subset Q$ satisfies
\[
\PWidth(P)^2=\frac{48}{353}<\frac{36}{133}=\PWidth(Q)^2.
\]
The proof is finite and exact: it reduces pyramidal width to facial distance,
certifies every face, and solves only rational quadratic programs of dimension
at most four.

The example leaves useful structured questions open. Monotonicity may still
hold under additional hypotheses on the inserted vertex or for restricted
families, and alternative geometric condition numbers may have better
hereditary behavior. Independently proved bounds for special families, such as
the structured facial-distance bounds of \citet{chakrabarti2024}, are
unaffected. Any lower bound derived solely from the general vertex-insertion
conjecture, however, requires a separate proof.

\section{Disclosure}
\label{sec:disclosure}
The proof strategy and counterexample were produced by OpenAI’s GPT-5.6 Sol Ultra through Codex in
response to prompts from Jinze Zhao. Codex was also used to revise
the exposition and prepare the LaTeX manuscript. The author selected the
problem, directed the interactions and revisions, and is the sole named author. The AI system is acknowledged as a reasoning and writing tool, not
as an author. This disclosure is not a substitute for independent expert
mathematical review.

\bibliographystyle{plainnat}
\bibliography{references}

\appendix
\section{Supporting and affine certificates}
\label{app:combinatorics}

\Cref{tab:supporting-planes} gives primitive integer supporting inequalities
for all facets used in \cref{prop:face-lattices}. Direct substitution verifies
that $a^\top v_i=b$ exactly for the indices in the facet column and that all
other relevant vertices satisfy a strict inequality.

\begin{table}[htbp]
  \centering
  \small
  \caption{Supporting inequalities $a^\top x\leq b$.}
  \label{tab:supporting-planes}
  \begin{tabular}{@{}ccrc@{}}
    \toprule
    Polytope & Facet & \multicolumn{1}{c}{$a$} & $b$ \\
    \midrule
    $P,Q$ & $013$ & $(5,-4,0)$ & $7$ \\
    $P,Q$ & $034$ & $(-3,-1,0)$ & $6$ \\
    $P,Q$ & $123$ & $(5,1,5)$ & $7$ \\
    $P,Q$ & $234$ & $(-1,1,2)$ & $4$ \\
    \midrule
    $P$ & $014$ & $(-13,7,-17)$ & $9$ \\
    $P$ & $124$ & $(-1,1,-1)$ & $1$ \\
    \midrule
    $Q$ & $015$ & $(1,-2,-6)$ & $11$ \\
    $Q$ & $045$ & $(-9,-1,-3)$ & $15$ \\
    $Q$ & $125$ & $(-1,2,-1)$ & $3$ \\
    $Q$ & $245$ & $(-3,3,-1)$ & $5$ \\
    \bottomrule
  \end{tabular}
\end{table}

For $i<j<k<\ell$, define
\[
\Delta_{ijk\ell}
  :=\det\bigl[v_j-v_i\ \ v_k-v_i\ \ v_\ell-v_i\bigr].
\]
The exact values in \cref{tab:determinants} show that no four of the six points
are coplanar.

\begin{table}[htbp]
  \centering
  \small
  \caption{All four-point affine determinants.}
  \label{tab:determinants}
  \begin{tabular}{@{}crcrcr@{}}
    \toprule
    Indices & $\Delta$ & Indices & $\Delta$ & Indices & $\Delta$ \\
    \midrule
    $0123$ & $ 60$ & $0124$ & $-12$ & $0125$ & $-42$ \\
    $0134$ & $-68$ & $0135$ & $-48$ & $0145$ & $-38$ \\
    $0234$ & $-32$ & $0235$ & $-12$ & $0245$ & $-20$ \\
    $0345$ & $-12$ & $1234$ & $-36$ & $1235$ & $-66$ \\
    $1245$ & $-12$ & $1345$ & $ 46$ & $2345$ & $ 28$ \\
    \bottomrule
  \end{tabular}
\end{table}

\clearpage
\section{Complete facial-distance tables}
\label{app:distance-tables}

Each value below is
\[
d_R(F)^2
  :=\dist\!\left(F,\conv\bigl(V(R)\setminus F\bigr)\right)^2.
\]
Faces are written as vertex-index strings. The tables contain every nonempty
proper face from \cref{prop:face-lattices}.

\begin{table}[htbp]
  \centering
  \small
  \caption{All squared facial distances for $P$.}
  \label{tab:P-distances}
  \begin{tabular}{@{}crcr@{}}
    \toprule
    Face $F$ & $d_P(F)^2$ & Face $F$ & $d_P(F)^2$ \\
    \midrule
    \multicolumn{4}{c}{\textit{Vertices}} \\
    $0$ & $4624/483$ & $1$ & $177/10$ \\
    $2$ & $54/19$   & $3$ & $80/7$ \\
    $4$ & $32/13$   &     & \\
    \addlinespace
    \multicolumn{4}{c}{\textit{Edges}} \\
    $01$ & $4624/419$ & $03$ & $10$ \\
    $04$ & $16/5$     & $12$ & $8$ \\
    $13$ & $3600/1619$& $14$ & $\mathbf{48/353}$ \\
    $23$ & $432/233$  & $24$ & $108/17$ \\
    $34$ & $16/5$     &      & \\
    \addlinespace
    \multicolumn{4}{c}{\textit{Facets}} \\
    $013$ & $108/17$   & $014$ & $432/233$ \\
    $034$ & $8$        & $123$ & $16/5$ \\
    $124$ & $10$       & $234$ & $4624/419$ \\
    \bottomrule
  \end{tabular}
\end{table}

\begin{table}[htbp]
  \centering
  \small
  \caption{All squared facial distances for $Q$.}
  \label{tab:Q-distances}
  \begin{tabular}{@{}crcr@{}}
    \toprule
    Face $F$ & $d_Q(F)^2$ & Face $F$ & $d_Q(F)^2$ \\
    \midrule
    \multicolumn{4}{c}{\textit{Vertices}} \\
    $0$ & $72/17$    & $1$ & $108/7$ \\
    $2$ & $54/19$    & $3$ & $80/7$ \\
    $4$ & $784/395$  & $5$ & $1444/507$ \\
    \addlinespace
    \multicolumn{4}{c}{\textit{Edges}} \\
    $01$ & $576/257$   & $03$ & $91/10$ \\
    $04$ & $144/475$   & $05$ & $92/19$ \\
    $12$ & $38/5$      & $13$ & $3600/1619$ \\
    $15$ & $441/101$   & $23$ & $432/233$ \\
    $24$ & $115/34$    & $25$ & $\mathbf{36/133}$ \\
    $34$ & $16/5$      & $45$ & $16/11$ \\
    \addlinespace
    \multicolumn{4}{c}{\textit{Facets}} \\
    $013$ & $1764/467$ & $015$ & $100/11$ \\
    $034$ & $16/3$     & $045$ & $16/5$ \\
    $123$ & $16/5$     & $125$ & $16/3$ \\
    $234$ & $100/11$   & $245$ & $1764/467$ \\
    \bottomrule
  \end{tabular}
\end{table}

\section{Machine-checkable certificate}
\label{app:verifier}

The verifier represents a rational number as a normalized pair of arbitrary
precision integers. Gaussian elimination solves \cref{eq:kkt-system} without
rounding. Its expected terminal summary is
\begin{verbatim}
PWidth(P)^2 = 48/353 at face 14
PWidth(Q)^2 = 36/133 at face 25
squared ratio = 1059/532
All exact dependency-free checks passed, including every table entry.
\end{verbatim}
The script also asserts each intermediate value, so this summary is printed
only after all combinatorial and distance certificates have passed.

\end{document}